%% file: iclr2026_conference.tex
\documentclass{article} 
\usepackage{iclr2026_conference,times}

\input{math_commands.tex}

\usepackage{multirow}  
\usepackage{array}     
\usepackage[utf8]{inputenc} 
\usepackage[T1]{fontenc}    
\usepackage{hyperref}       
\usepackage{url}            
\usepackage{booktabs}       
\usepackage{nicefrac}       
\usepackage{microtype}      
\usepackage{xcolor}         
\usepackage[english]{babel}
\usepackage[noend]{algorithmic}
\usepackage{algorithm}
\usepackage{amsfonts}
\usepackage{graphicx}
\usepackage{color}
\usepackage{booktabs}  
\usepackage{pifont}   
\usepackage[inline]{enumitem}
\newcommand{\cmark}{\ding{51}}  
\newcommand{\xmark}{\ding{55}}  
\usepackage{threeparttable} 
\usepackage{amsmath, amsthm}
\usepackage{mathrsfs}
\usepackage{pifont}
\usepackage{multirow}
\usepackage[disable]{todonotes}
\newtheorem{theorem}{Theorem}[section]

\usepackage{wrapfig}
\usepackage[compact]{titlesec}

\newcommand{\gao}[1]{{\color{black}#1}}

\title{Learning to Solve Orienteering Problem with Time Windows and Variable Profits}

\author{\textbf{Songqun Gao}$^{1,3}$, \ \textbf{Zanxi Ruan}$^{4,}$\thanks{Corresponding authors.} , \ \textbf{Patrick Floor}$^5$, \ \textbf{Marco Roveri}$^{2,3}$, \ 
\textbf{Luigi Palopoli}$^{2,3}$, \\ 
\textbf{Daniele Fontanelli}$^{1,3}$ 
 \\ 
$^1$Department of Industrial Engineering, Università di Trento, Trento, Italy\\
$^2$Department of Information Engineering and Computer Science, Università di Trento, Trento, Italy\\
$^3$Interdepartmental Robotics Labs (IDRA), Trento, Italy\\
$^4$Department of Engineering for Innovation Medicine, Università di Verona, Verona, Italy\\
$^5$Pipple, Eindhoven, Netherlands \\
\texttt{songqun.gao@unitn.it}, \texttt{zanxi.ruan@univr.it}
}
\iclrfinalcopy 
\begin{document}

\maketitle

\begin{abstract}
  The orienteering problem with time windows and variable profits
  (OPTWVP) is common in many real-world applications and involves
  continuous time variables. Current approaches fail to develop an
  efficient solver for this orienteering problem variant with discrete
  and continuous variables. In this paper, we propose a learning-based two-stage
  DEcoupled discrete-Continuous optimization with Service-time-guided
  Trajectory (DeCoST), which aims to effectively decouple the discrete
  and continuous decision variables in the OPTWVP problem, while
  enabling efficient and learnable coordination between them. In the
  first stage, a parallel decoding structure is employed to predict
  the path and the initial service time allocation. The second stage
  optimizes the service times through a linear programming (LP)
  formulation and provides a long-horizon learning of structure
  estimation. We rigorously prove the global optimality of the
  second-stage solution. Experiments on OPTWVP instances demonstrate
  that DeCoST outperforms both state-of-the-art constructive solvers
  and the latest meta-heuristic algorithms in terms of solution
  quality and computational efficiency, achieving up to 6.6x inference
  speedup on instances with fewer than 500 nodes. Moreover, the proposed framework is compatible with various
  constructive solvers and consistently enhances the solution quality
  for OPTWVP. 
\end{abstract}

\input{sections/1-Introduction}
\input{sections/2-RelatedWork}
\input{sections/3-Preliminaries}
\input{sections/4-Methodology}
\input{sections/5-Experiments}

\input{sections/6-Conclusions}


\subsubsection*{Acknowledgments}
Co-funded by the European Union under the EU - HE Magician – Grant
Agreement 101120731. Views and opinions expressed are however those of
the author(s) only and do not necessarily reflect those of the
European Union or the European Commission. Neither the European Union
nor the granting authority can be held responsible for them.

\section*{Ethics Statement}
This work complies with the ICLR Code of Ethics.  
We did not involve human subjects or personal data.  
All datasets are publicly available and used under their original licenses. For more details about the datasets, please refer to Appendix~\ref{appendix:experimentdetails}.
Our study does not target sensitive attributes and poses no foreseeable risks beyond the usual limitations of machine learning research.  
We have no conflicts of interest or external sponsorship to declare.  
For transparency and reproducibility, we provide an anonymized code repository with all configuration files.

\section*{Reproducibility Statement}
All datasets used are publicly accessible, enabling full reproduction of our results. We report all training and evaluation details and hyperparameters in the paper. We also provide an code repository at
\url{https://github.com/SwonGao/DeCoST}.

\bibliography{iclr2026_conference}
\bibliographystyle{iclr2026_conference}

\appendix
\input{sections/7-appendix}

\end{document}

%% file: math_commands.tex
\usepackage{amsmath,amsfonts,bm}

\def\eqref#1{equation~\ref{#1}}

\def\1{\bm{1}}

\DeclareMathAlphabet{\mathsfit}{\encodingdefault}{\sfdefault}{m}{sl}
\SetMathAlphabet{\mathsfit}{bold}{\encodingdefault}{\sfdefault}{bx}{n}



%% file: sections/1-Introduction.tex
\section{Introduction}
The Vehicle Routing Problem (VRP) is one of the most fundamental and extensively studied combinatorial optimization problems (COP), aiming to determine which nodes to visit and in what order under given constraints. The orienteering Problem (OP), an important variant of VRP, assumes a fixed total time budget and requires selecting a subset of nodes to visit in order to maximize the total collected reward, which has demonstrated significant potential in
practical scenarios such as factory scheduling~\citep{gannouni2020neural}, logistics transportation~\citep{baty2024combinatorial}, robotic planning~\citep{ding2025enhancing,naik2024basenet}, etc. 
However, in many real-world tasks, the two basic assumptions of the OP often do not hold: fixed rewards and always-accessible nodes. In practice, rewards usually increase with longer service times (variable profit) \citep{service_dp1, OPSTP_uav}, and many nodes are only accessible within specific time windows \citep{Chen2024LookingAT}. Thus, the optimizer must decide not only which nodes to visit, but also the service time at each node. These requirements naturally give rise to the Orienteering Problem with Time Windows and Variable Profits (OPTWVP) \citep{ils, OPSTP}, which simultaneously involves discrete route planning and continuous service-time allocation under a fixed time budget to maximize the total reward. Figure~\ref{teaser} (a) illustrates a representative example of OPTWVP applications.

Compared with classical VRPs, OPTWVP represents a more realistic and challenging class of VRP in which discrete routing decisions and continuous service-time allocations are tightly coupled: Unlike classical VRPs that determine only the routing sequence, OPTWVP requires jointly optimizing the route and the service time spent at each node, because the reward depends directly on service times on vertices. A selected route dynamically shapes the feasible region of service time decisions through travel times and time window constraints, while the allocated service times, in turn, affect the obtained rewards and influence the overall route decisions. This bidirectional dependency prevents the two components from being optimized independently, causing an exponential expansion of the joint search space.

At present, the main strategies for addressing \gao{OP/VRP} variants can be generally divided into heuristic/metaheuristic methods and learning-based neural combinatorial optimization (NCO) methods. 
Heuristic and metaheuristic algorithms \gao{typically follow} a multistage strategy, where an initial feasible solution is first generated and subsequently refined \gao{with local search improvements~\citep{OPSTP_healcare, OPSTP_transportation, OPSTP_bothstart_service}. Although these approaches demonstrate good performance on specific variants, their performances are often heavily limited by manually designed
heuristics and exhaustive refinement procedures.} 
On the other hand, the rapidly growing NCO methods \citep{am, pomo} seek to apply learning-based approaches to increase the search efficiency in classic VRPs. 
In pursuit of more practical applicability \citep{bi2024learning, polynet, hypergraph}, more recent studies in NCO also explore VRP variants with constraints. They adopt 
problem deconstruction methods \citep{hottung2025neural, kuang2024rethinking} to decompose the original problem by predicting a certain number of variables and then performing search/optimization within their local neighborhoods. 
However, most NCO methods focus solely on routing, while the joint optimization of routing and service-time allocation remains under-explored. In these problems, such decomposition leads to shortsighted route predictions that cannot anticipate optimal service times, and the subsequent local adjustments are insufficient to correct the structural biases introduced in the first stage.
Unlike the above unsupervised NCO methods, current Mixed-Integer Programming (MIP) problem studies provide a similar solution to general MIP problems \citep{liu2025apollomilp, paulus2023learning,hu2024multistage}, but most of these methods rely on annotated data, which limits their applications. Taken together, for problems like OPTWVP, where discrete routing decisions and continuous service-time allocation are inherently interdependent, achieving an efficient and high-quality solution remains unsolved.

\begin{figure}[t]
  \centering
  \includegraphics[scale=0.32]{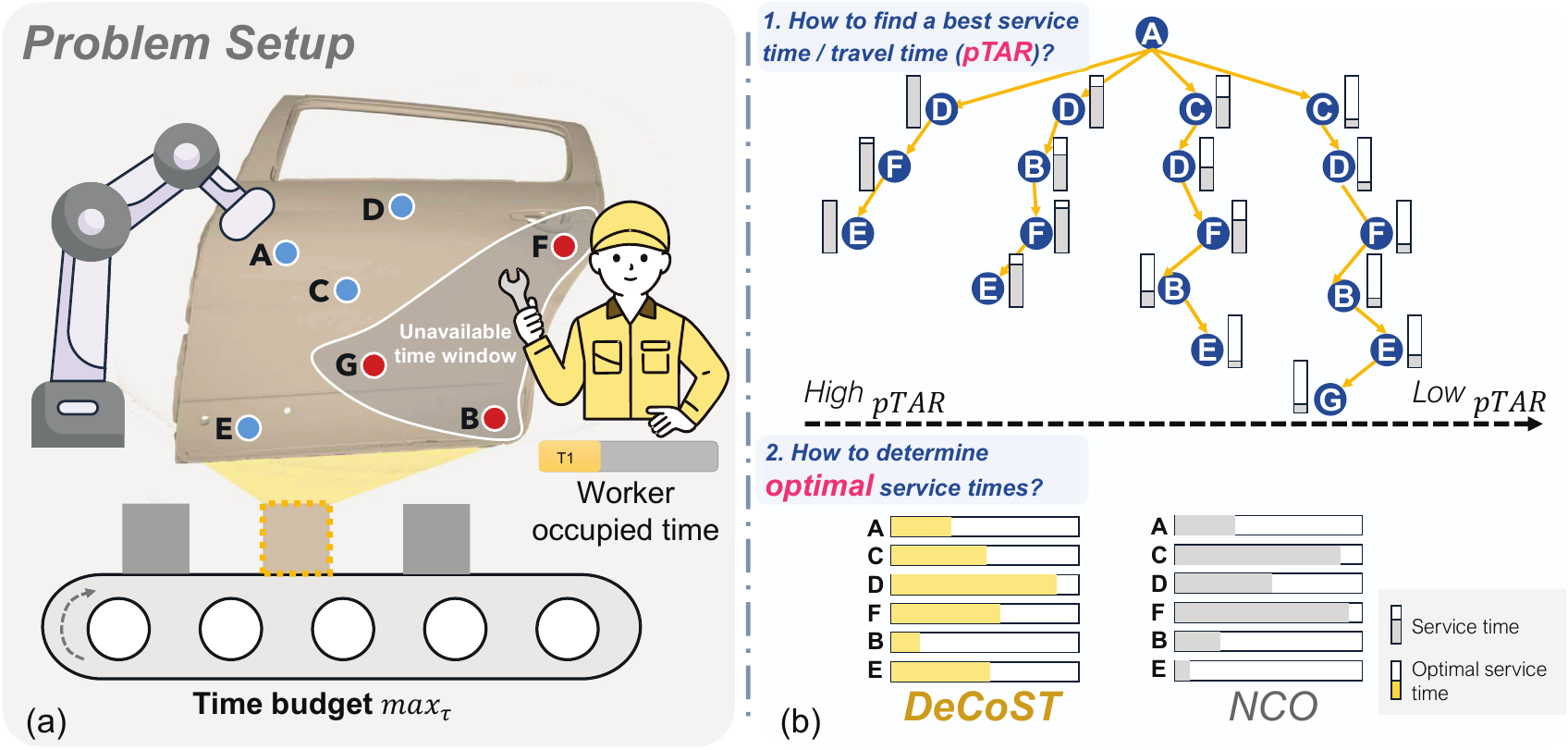}
    \vspace{-5pt}
    \caption{(a) Application of OPTWVP in industrial scenarios. \gao{A manipulator and a human are collaborating on a defect removal task on the assembly line. The robot has limited time to operate. It cannot access the nodes outside the time windows for safety reasons, since the human might have collisions with the robot. Meanwhile, the reward depends on the service time spent handling the defect, with the defect size decreasing linearly over time \citep{industrial_example}.}  (b) Illustration of the challenge faced by NCO methods in allocating hybrid discrete-continuous decisions and capturing continuous-variable representations.}
  \label{teaser}
  \vspace{-15pt}
\end{figure}




\gao{In this paper, we propose DEcoupled discrete-Continuous optimization with Service-time-guided Trajectory (DeCoST), a learning-based two-stage collaborative framework proposed for solving OPTWVP-like problems.
Leveraging the structure of OPTWVP, DeCoST decomposes OPTWVP into a routing problem and a service time allocation problem, which effectively decouples the searching process while enabling efficient and learnable coordination.}
In the first stage, DeCoST employs a parallel decoder that integrates a routing decoder and a service time decoder (STD) to provide a trajectory generation strategy and an initial estimate of service times, which explicitly considers the influence of continuous variables on the quality of the solution and provides an initial coordination between discrete path decisions and continuous service time allocations.
In the second stage, based on the fixed discrete path trajectory, the continuous decision variable optimization problem is further simplified to a linear programming (LP) problem. \gao{Especially}, a service time optimization (STO) algorithm is applied \gao{in the second stage to guarantee parallel computation} and maximize the total collected reward. We rigorously prove that the STO can obtain the global optimum. 
In this two-stage approach, we introduce a repulsive supervisory index, profit-weighted time allocation ratio (pTAR), to provide a feedback signal for service time prediction. By preventing premature convergence to the deterministic conditional optimum, the model preserves flexibility in service-time prediction and improves the overall solution quality.

The overall contributions of this paper are as follows. 

(1) We propose DeCoST, a learning-based two-stage approach for solving OPTWVP, where discrete routing decisions and continuous service-time allocations are decomposed, realizing the joint optimization of routing and service time under the constraints of time windows and variable profits.

(2) By leveraging parallel computation of STO and the cross-stage feedback mechanism, the proposed two-stage approach provides a global long-horizon structure estimation in the early route decision stage, enabling us to search the route and service time efficiently.

(3) We conduct thorough experiments demonstrating that the proposed method outperforms not only existing state-of-the-art (SOTA) NCO approaches but also the latest metaheuristic algorithms in both solution quality and computational efficiency. 

%% file: sections/2-RelatedWork.tex
\section{Related Work}
\paragraph{Heuristic and Meta-Heuristic Optimization of COPs with Time Constraints.}


Traditional heuristic and metaheuristic approaches typically employ a multistage strategy to solve COP variants with varying profits \citep{ils,OPSTP_uav,OPSTP_healcare,OPSTP_transportation,OPSTP_bothstart_service, CI2,SOP3}. Initially, greedy strategies are used to construct initial heuristic solutions, which are then refined through active search algorithms \citep{ils}. \cite{OPSTP} also introduces a multi-phase approach to separately handle discrete and continuous decision variables. 
\gao{
Although effective on specific problems, these methods face inherent challenges in solving OPTWVP-like problems: they depend on manually designed heuristics. Moreover, any route change reshapes all nodes’ feasible service-time intervals, which affects the overall objective. As a result, local operations cannot be evaluated using local information, and each neighborhood operation requires exhaustively solving continuous subproblems.}

\paragraph{Neural Combinatorial Optimization.}
Recent advances in NCO methods have focused primarily on improving solution quality \citep{kwon2021matrix, learningbasedsolver,zhang2023let}, scalability \citep{scale1_NAR, ye2024glop, udc_scale, luo2023neural, li2021learning,su2023}, and generalization \citep{am,pomo,scale2,zhang2023robust, zhou2024mvmoe, rl4co, rlor, kim2022symnco, drakulic2023bqnco,li2025mltspbench,bi2022learning,berto2025routefinder}. NCO approaches can be categorized into autoregressive (AR) and non-autoregressive (NAR) methods. AR methods adopt an end-to-end generative approach to solve COPs \citep{chalumeau2023}. In contrast, NAR methods \citep{ma2023neuopt} focus on learning structural information directly to guide the construction of solutions \citep{gnn2_NAR, xin2021neurolkh, diffusion1_NAR, li2024fast} by generating intermediate representations, which are then used to guide meta-heuristic algorithms \citep{RN1_NAR, beam_NAR}. However, these algorithms are not directly applicable because they lack the ability to effectively handle constraints.

\paragraph{NCO with Constraints.}
Recent NCO studies incorporate exploring more realistic and complex
constraints in variants of COPs \citep{huang2023searching}. For example,
time-window constraints have been introduced in COP formulations
\citep{Chen2024LookingAT, bi2024learning}, where multistep estimation strategies are
adopted to ensure feasibility. Hypergraph-based approaches are also applied to deal with
constrained COPs \citep{hypergraph}. \gao{However, most NCO methods focus solely on routing, while the joint optimization of routing and service-time allocation remains under-explored, leading to shortsighted routing that overlooks nodes' interconnection with service times.} In MIP settings, coupled discrete-continuous decision variables are dealt with through model reductions approach \citep{li2024fastinterpretablemixedintegerlinear}, hybrid reinforcement learning \citep{zhang2024towards}, contrastive learning \citep{han2023gnn, huang2024conpas}, and so on. \gao{However, most of these methods rely on annotated data, which limits their generalization in scenarios without supervised data.}


%% file: sections/3-Preliminaries.tex
\section{Preliminaries} \label{section:3}
An OP instance can be explained on an undirected complete graph $\mathcal{G\{V, E\}}$ with $n$ vertices, where $\mathcal{V} = \{v_0, v_1, \cdots, v_{n-1}\}$, $\mathcal{E} = \{e_{i,j}| i, j \in \mathcal{V}\}$. The vertex $v_0$ represents the start depot and the end depot. All vertices \( v_i \) and \( v_j \in \mathcal{V} \) are connected by the edge \( e_{i,j} \), and the corresponding travel time from node \( i \) to node \( j \) is denoted as \( t_{ij} \). $r: \mathcal{V} \mapsto \mathbb{R}$ is the reward function associated with each vertex. 
The OPTWVP aims to find a path $\tau = \{ v_0, v^{(1)}, ..., v^{(l-2)}, v_{0}\}$ (of length $l$) and the corresponding service time $\mathbf{d} \in \mathbb{R}^l$, where all intermediate vertices $v^{(j)} \in \mathcal{V} \setminus \{v_0\}$ are visited only once, and maximize the total reward $\sum_{v_j \in \tau} r(v_j)$ collected along the path $\tau$, subject to: 

\begin{enumerate}[label=(\arabic*)]
    \item Time limit: The total travel and service time must not exceed the given time limit $\max_\tau$; 
    \item Service time boundaries: The service time $d_i$ of each visited vertex must satisfy $d_i \in [0, d_{i\max}]$; 
    \item Time window constraints: The start time ${s_i}$ of each visited vertex must lie in the time window $[s^-_i, s^+_i]$, i.e., $s_i \in [s_i^-, s_i^+]$;
    \item Time varying profits: Each vertex has a corresponding unit profit $p_i$. The profit of each vertex is related to the service time $d_i$ and the unit profit $p_i$ of the node, 
    which is defined by \( f(d_i, p_i) = p_id_i\). 
\end{enumerate}

Several classical OP variants, including the OP, OP with variable profits (OPVP), and OP with time windows (OPTW), are special cases of OPTWVP. See Appendix \ref{appendix:optwvp} for details and mathematical representation of OPTWVP \citep{ils}.

%% file: sections/4-Methodology.tex
\section{Methodology}

\begin{figure}[t]
  \centering
  \includegraphics[scale=0.75]{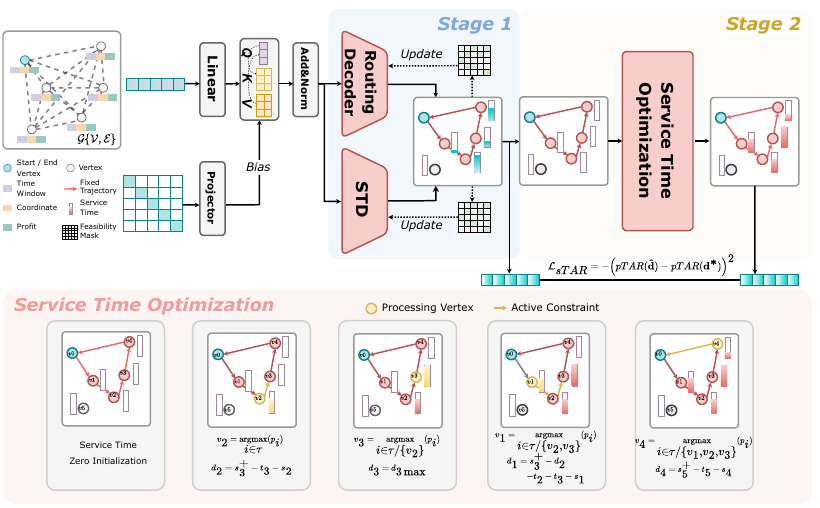}
  \caption{An overview of the DeCoST approach. The upper part illustrates the two-stage collaborative optimization process, while the lower part presents the details of the STO Algorithm~\ref{service_time_opt_algorithm}.}
  \label{frame}
  \vspace{-15pt}
\end{figure}
To enable the adoption of RL techniques, we cast the OPTWVP as a
constrained Markov Decision Process (CMDP) defined by the tuple $(\mathcal{S}, \mathcal{A}, \gao{\mathcal{P}, \mathcal{C}, R})$. $\mathcal{S}$ is the infinite state space and $A$ denotes the infinite action space. Each state $\mathcal{\sigma}_i \in S$ encodes the previous actions $\{a_0, a_1, \cdots, a_k\}$, where an action $a_i = (v^{(i)}, d_{v^{(i)}}/d_{v^{(i)}\max})\in \mathcal{A}$ includes a discrete number indicating the selection of the vertex and a continuous number $\delta d_i\in [0,1] \subset \mathbb{R}$ indicating the normalized service time ratio at the vertex $v_i$, where $\delta d_{v^{(i)}} = d_{v^{(i)}}/ d_{v^{(i)}\max}$.
$\mathcal{P}$ is the transition probability function, $\mathcal{C}(\sigma,a) = [c_1(\sigma,a), \cdots, c_{\gao{\kappa} n}(\sigma,a)]^T$ is the set of corresponding constraint functions and 
$R(\tau, \mathbf{d}|\mathcal{G}) = \sum_{i=1}^{l-1} p_{v^{(i)}} d_{v^{(i)}}$ denotes the cumulative reward of the given trajectory $\tau$ and the corresponding service time $\mathbf{d}$, 
where $\kappa$ denotes the number of constraint functions defined for each vertex. $p_{v^{(i)}}$ is the unit profit of vertex $v^{(i)}$, and $d_{v^{(i)}}$ is the service time.

The objective of CMDP is to learn a policy $\pi_\theta: \mathcal{S}\rightarrow \mathcal{P(A)}$ that maximizes the overall reward:
\begin{equation}
\begin{aligned}
\max_{\pi_\theta} \quad & J_r(\pi_\theta) = \mathbb{E}_{(\tau,\mathbf{d}) \sim\pi_\theta} \left[ \gao{R(\tau, \mathbf{d}|\mathcal{G}}) \right], \\
\text{s.t.} \quad & \pi_\theta \in \Pi_c, \Pi_c=\left\{\pi_\theta |  \mathcal{C}(\sigma_i, a_i) \leq \mathbf{0}, \gao{\forall i \in N} \right\},
\end{aligned}
\end{equation}
where $J_r$ is the expected return, and $\Pi_c$ denotes the set of feasible policies satisfying all OPTWVP constraints.
At each step $i$, the action is sampled from the policy $\pi_\theta$. 
We denote this sampling process compactly as $(\tau, \mathbf{d}) \sim \pi_\theta$, where both the path and service allocation are jointly generated by the policy.
The overall policy $\pi_\theta$ is trained by REINFORCE \citep{reinforce} 
using the following loss function:
\begin{equation}
\begin{aligned}
    \mathcal{L}= - \frac{1}{M}\sum_{i=1}^M (R(\tau, \mathbf{d}|\gao{\mathcal{G}})-b)\log(\pi_\theta),
\end{aligned}
\label{loss}
\end{equation}
where $b =\frac{1}{M}\sum_{i=1}^M R(\tau, \mathbf{d}|\mathcal{G})$ denotes the average reward of the trajectories among $M$ batch samples. 

\gao{To address the inherent interdependency between discrete routing decisions and continuous service-time allocation,} we propose a two-stage DeCoST framework to decompose and coordinate the path selection and service time allocation. 
As shown in Figure \ref{frame}, 
in the first stage, DeCoST employs a parallel decoder that integrates a \gao{routing} decoder and an STD to provide a path generation strategy and an initial estimation of service times, \gao{which jointly generate the discrete–continuous action pair $a_i$ used during path construction.}
In the second stage, based on the fixed discrete path trajectory, the service time allocation problem is simplified into an LP problem, which is efficiently solved by STO. 
Given the initial service times $\mathbf{\hat{d}}$ in the first stage and the optimal service time $\mathbf{d}^*$ in the second stage, 
pTAR loss of $\mathbf{\hat{d}}$ and $\mathbf{d}^*$ are calculated to encourage STD to broadly explore policies, thereby enabling an early and reliable estimation of problem structure and improving the first-stage service-time prediction. 
The following sections introduce the specific design of the two-stage optimization process and the service time supervision mechanism.

\subsection{Two-stage optimization}

\paragraph{Determine the feasible trajectory.}
In the first stage, the policy $\pi_\theta$ uses a parallel decoder structure with a \gao{route} decoder and an STD. The route decoder selects the next node to visit, while the STD simultaneously predicts the corresponding service time. Together, they generate a feasible trajectory $\tau$ along with the corresponding service times $\mathbf{\hat{d}}$.

Two techniques are integrated into the generation process to improve the quality of the constructed solutions and ensure their feasibility: spatial encoding for structural awareness and feasibility masking for time window constraints. These enhancements enable the model to capture graph connectivity better while avoiding infeasible decisions during trajectory construction. 

Spatial encoding \citep{graphformer} is applied to the solver to incorporate edge features. Therefore, the solver not only encodes node features, but also encodes edge features (node distances) as an attention bias, thereby improving the model's understanding of the graph structures and enabling the model to pay more attention to node combinations with lower costs and denser connections.

In addition, similar to \cite{bi2024learning}, a feasibility masking is applied to guarantee the feasibility of the generated trajectory: the mask dynamically excludes candidate vertices that would violate constraints in one of the following two cases:
\begin{enumerate*}[label=\roman*)]
\item When adding a new vertex and returning to $v_0$ within the time limit $max_\tau$ is impossible. 
\item When the start time of the next vertex exceeds its time window.     
\end{enumerate*}
By incorporating feasibility masking, the model is guided to construct only feasible trajectories in the first stage, which significantly reduces the search space.

The feasible trajectory $\tau$ is subsequently fixed in the second stage, allowing the decoupling of the discrete routing variables and the continuous service times.

\paragraph{Service time optimization.}
Given the fixed trajectory, in the second stage, OPTWVP is decomposed into a decoupled continuous LP problem that maximizes the total collected rewards. The second stage service time optimization problem, expressed in matrix form, is given as follows:
\begin{equation}
\begin{aligned}
    \max_{\textbf{s}, \textbf{d}} \quad & \textbf{p}^T \textbf{d}, \\
    \textrm{s.t.} \quad & \textbf{s}^- \leq \textbf{s} \leq \textbf{s}^+, \\
    & \textbf{0} \leq \textbf{d} \leq \textbf{d}_{\max}, \\
    & (I-U)\textbf{s} + \textbf{d} + \textbf{t} \leq \textbf{0},
\end{aligned}
\label{service_time_opt_problem}
\end{equation}
where $\textbf{p} = [p_0, \dots, p_{l-1}]^\top$, $\textbf{d} = [d_0, \dots, d_{l-1}]^\top$, $\textbf{d}_{\max}=[d_{1\max},\dots, d_{l-1\max}]$, $\textbf{s} = [s_0, \dots, s_{l-1}]^\top$, $\textbf{s}^- = [s^-_0, \dots, s^-_{l-1}]^\top$, $\textbf{s}^+ = [s^+_0, \dots, s^+_{l-1}]^\top$, $\textbf{t} =[t_0,\dots, t_{l-1}] \in \mathbb{R}^l$ respectively denote profit, service time, maximum service time, start time, start, end of time window and travel time in the selected trajectory $\tau$. $I$ and $U \in \mathbb{R}^{l\times l}$ are the identity matrix and the upper shift matrix, and $l$ denotes the trajectory length. 

\begin{algorithm}[t!]
    \caption{Service time optimization algorithm}
    \label{service_time_opt_algorithm}
    \renewcommand{\algorithmicrequire}{\textbf{Input: }}    
    \renewcommand{\algorithmicensure}{\textbf{Output: }}
    \begin{algorithmic}[1]
        \REQUIRE {\small Trajectory $\tau$, distance matrix $\mathbf{t}$, time windows $(\mathbf{s}^-$, $\mathbf{s}^+)$, max service time $\mathbf{d_{\max}}$, limited time $\max_\tau$.}
        \ENSURE {\small Optimal service time allocation $\mathbf{d^*}$, optimal total reward $\mathbf{r^*}$}
        \STATE Initialize: trajectory length $l$, service duration $\mathbf{d\gets0}$, start time $\mathbf{s\gets 0}$

        \STATE $d_0 \gets 0, s_0 \gets 0$, $s_1 \gets \max\{s_1^-, t_1\}$ // initialization
        \FOR{$i = 0$ to $l - 2$}
            \STATE  $s_{i+1} \gets \max \{ s_i + t_{i+1}, s_{i+1}^-\}$ // start time update
        \ENDFOR 

        \STATE $sorted \gets$ From high to low, sorted indices of nodes in $\tau$.

        \FORALL{$i$ in $sorted$}
            \STATE $d_i = \min \{d_{i\max}, 
            \min_{k=0}^{n - i - 1} \{ s_{i+k+1}^+ - s_i - \sum_{j=1}^{k} d_{i+j} - \sum_{j=1}^{k+1} t_{i+j} \} \} \}$
            \FOR{$j = i+1$ to $n-1$}
                \STATE $s_j \gets \max\{s_{j-1} + d_{j-1} + t_j,\; s_j^-\}$ // start time update
            \ENDFOR
        \ENDFOR
        \STATE $\textbf{d}^* \gets \textbf{d}$, $\textbf{s}^* \gets \textbf{s}$, $R^* \gets\textbf{p}^T \textbf{d}^*$
    \end{algorithmic}
    \vspace{-3pt}
\end{algorithm}

\gao{An STO algorithm is introduced that performs parallel computation to obtain the optimal service-time allocation}, as shown in Algorithm \ref{service_time_opt_algorithm}. 
\gao{The algorithm first constructs a feasible solution, then iteratively selects the vertex with the highest reward and assigns a service time that satisfies both the end of time-window constraints and the maximum service-time limit. The start times of subsequent nodes are updated accordingly. The procedure terminates once the overall time budget $\max_\tau$ is reached.}

Theorem \ref{opt_theorem} hereafter states the optimality of the solution computed by Algorithm~\ref{service_time_opt_algorithm}.

\begin{theorem}
The service time optimization algorithm shown in Algorithm~\ref{service_time_opt_algorithm} returns an optimal solution $(s^*, d^*)$ to the service time scheduling problem specified in Equation (\ref{service_time_opt_problem}).
\label{opt_theorem}
\end{theorem}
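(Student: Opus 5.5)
We argue by an exchange argument for the greedy allocation, using the structure of the LP (\ref{service_time_opt_problem}) on the fixed trajectory $\tau$. First, we reduce the start-time variables to the service times. For any $\mathbf{d}$, the earliest-start schedule $s_0=0$, $s_{j}=\max\{s_{j-1}+d_{j-1}+t_j,\,s_j^-\}$ is pointwise minimal among all $\mathbf{s}$ satisfying the lower window bounds and the precedence constraints $(I-U)\mathbf{s}+\mathbf{d}+\mathbf{t}\le \mathbf{0}$. Hence $\mathbf{d}$ is feasible if and only if this earliest schedule satisfies $s_j\le s_j^+$ for all $j$, with the budget $\max_\tau$ treated as the upper window of the return depot. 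Unrolling the recursion, this condition can be written as the family of inequalities
\[
s_i+\sum_{j=0}^{k} d_{i+j}+\sum_{j=1}^{k+1} t_{i+j}\le s^+_{i+k+1}\quad\text{for all } i,k .
\]
Here $s_i$ is the earliest start of node $i$, which depends only on $d_0,\dots,d_{i-1}$. The update on line 8 of Algorithm~\ref{service_time_opt_algorithm} is exactly the largest $d_i$ keeping all these inequalities true given the current values of the other $d$'s. Consequently every iterate is feasible, and the output is feasible and maximal in each coordinate separately. Feasibility of the initial $\mathbf{d}=\mathbf{0}$ follows from the mask in the first stage.

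Second, we show that the feasible set in $\mathbf{d}$ is a polymatroid-like region. The constraint $s_j\le s_j^+$ is equivalent to requiring, for every $a\le b$, that the total service time on the interval $[a,b)$ be at most the available slack between the window start $s_a^-$ (or the time origin) and $s^+_b$ minus travel. So the feasible set is $\{\mathbf{0}\le\mathbf{d}\le\mathbf{d}_{\max}:\ \sum_{j\in[a,b)} d_j\le c_{ab}\}$. This is a system of interval capacity constraints, and its constraint matrix has the consecutive-ones property, hence is totally unimodular. We will prove the claim by showing that the greedy-by-profit procedure is optimal for such a system. The key exchange lemma is as follows. Suppose an optimal $\mathbf{d}'$ agrees with the greedy $\mathbf{d}$ on all nodes processed before $i$, but $d'_i<d_i$. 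Then some tight interval constraint containing $i$ in $\mathbf{d}'$ must contain a node $j$ with lower profit $p_j\le p_i$ and $d'_j>d_j$. Shifting $\epsilon$ from $j$ to $i$ preserves feasibility and does not decrease $\mathbf{p}^T\mathbf{d}$. Iterating this shift yields an optimal solution agreeing with greedy on $i$, and induction over the sorted order concludes.

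The main obstacle is making the exchange step rigorous, for two reasons. First, the max with $s_j^-$ makes the earliest-start map nonlinear: increasing $d_j$ may be absorbed by waiting time. So "tight constraint" must be phrased through the interval inequalities rather than through $\mathbf{s}$. Second, shifting mass from $j$ to $i$ must not violate some other interval that contains $i$ but not $j$. To handle this, we choose $j$ inside the intersection of all tight intervals containing $i$. The intervals on a line are laminar under intersection, and the intersection of tight intervals is tight by the submodular-type uncrossing of interval constraints. If that minimal tight interval contained no lower-profit node with $d'_j>d_j$, then all its mass would be on higher-priority nodes, which already agree with greedy. In that case the constraint would also be tight for greedy before $i$, contradicting $d_i>d'_i$. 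As an alternative certificate, we could instead construct LP dual variables from the tight intervals that the greedy fixes and verify complementary slackness; I would fall back on this if the uncrossing step becomes messy.
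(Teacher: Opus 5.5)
Your route differs from the paper's and can be completed, but the step carrying all the weight is justified by a principle that is false at the generality you state it.

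\textbf{The gap: uncrossing needs the specific capacities.} Consecutive ones and total unimodularity only give integral vertices; they say nothing about greedy. Greedy-by-profit is \emph{not} optimal for interval-capacity systems in general. Take $0\le d_k\le 1$, $d_1+d_2\le 1$, $d_2+d_3\le 1$ and $\mathbf p=(2,3,2)$. Greedy returns $(0,1,0)$ with value $3$, whereas $(1,0,1)$ has value $4$. At $(1,0,1)$ both intervals are tight but their intersection $\{2\}$ is not, so ``the intersection of tight intervals is tight'' fails. (Intervals through a common point are not laminar; they can cross.)

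What is missing is the specific form of the capacities your projection produces. Let $P_k=\sum_{m\le k}t_m$ be cumulative travel time along $\tau$. Then $c_{ab}=s_b^+-s_a^- -(P_b-P_a)=U_b-L_a$, with $U_b=s_b^+-P_b$ and $L_a=s_a^- -P_a$; the budget $\max_\tau$ enters as $s^+$ of the return depot.
\begin{itemize}
\item For crossing intervals $a<a'<b<b'$ this gives the exact identity $c_{ab}+c_{a'b'}=c_{ab'}+c_{a'b}$.
\item Interval sums of $\mathbf d$ satisfy the same identity, so tightness of $[a,b)$ and $[a',b')$ forces tightness of $[a',b)$ and $[a,b')$.
\item Two intervals containing $i$ are nested or crossing, so the minimal tight interval $I^*\ni i$ exists and is tight.
\end{itemize}
This is also exactly why the example above cannot arise from (\ref{service_time_opt_problem}): there $c(\{2\})=c(\{1,2\})+c(\{2,3\})-c(\{1,2,3\})\le 0$ as soon as $(1,0,1)$ is feasible. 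The bounds $d_{k\max}$ cause no trouble: $d_{i\max}$ is slack at $i$, and decreasing $d_j$ never violates $d_{j\max}$.

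\textbf{Two smaller repairs.} First, instead of ``iterating the shift,'' pick, among optimal solutions agreeing with greedy on the nodes processed before $i$, one maximizing $d'_i$; this set is a nonempty polytope.
\begin{itemize}
\item $d'_i\le d_i$ is then automatic, because the region is down-closed and line 8 computes the coordinatewise maximum.
\item If $d'_i<d_i$, some interval through $i$ must be tight; otherwise raise $d'_i$, using $p_i\ge 0$.
\item A single shift then contradicts the maximal choice.
\end{itemize}
Second, obtain $j$ by counting rather than by the ``tight for greedy before $i$'' phrasing. Write $\mathbf d(I^*)$ for the sum of $d_k$ over $I^*$. Then $\mathbf d(I^*)\le c(I^*)=\mathbf d'(I^*)$, together with agreement on earlier nodes, gives $\sum(d'_j-d_j)\ge d_i-d'_i>0$, the sum running over nodes of $I^*$ processed after $i$. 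So some such $j$ has $d'_j>d_j\ge 0$ and $p_j\le p_i$.

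\textbf{Comparison with the paper.} The paper argues on the dual side. It writes the KKT system of (\ref{service_time_opt_problem}) in $(\mathbf s,\mathbf d)$ and classifies route nodes as saturated, cut off by a downstream window end, or unserved. It then builds multipliers sub-trajectory by sub-trajectory by propagating $\lambda_5$ along the route. A valid dual certificate proves optimality in one stroke without induction, but it has to encode the profit order somewhere, and the paper's case analysis never visibly uses that order.

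Your primal route, once the modular identity is added, isolates the two facts the theorem actually rests on. Line 8 computes the coordinatewise maximum in the projected region, and that region has crossing-modular interval capacities, so it is in fact a polymatroid. Your route also makes explicit where the sorted order enters the argument.
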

The proof of this theorem can be found in Appendix \ref{appendix:proof_theorem1}.


\subsection{Supervised learning of service time decoder}
Although the trajectory $\tau$ and initial service time $\mathbf{\hat{d}}$ have been constructed through the constructive solver, and a set of optimal service times have been further calculated, the quality of initial service time allocation has a decisive influence on the quality of the final solution. 
This results from a significant trade-off in how the total time budget is allocated between travel and service. Improper service time allocation may lead to two extreme situations:
(1) \textit{ Low service time}: The model prefers to access more vertices, but each node is underserved, resulting in a potential low reward; (2) \textit{High service time}: Although the reward of a single vertex is increased, more optional nodes are sacrificed, which might reduce the overall reward.

This indicates that inappropriate travel-service time allocation can critically damage the quality of the solution. To learn the trade-off between travel time and service time allocation and better guide the policy of the initial service time assignment, a learnable metric, named the profit-weighted time allocation ratio (pTAR), is introduced. It is defined by the sum of profit-weighted service times $p_id_i$ versus travel times $t_i$ of the vertices in the trajectory $\tau$:
\begin{equation}
\begin{aligned}
    pTAR(\mathbf{d}) = \sum_{i \in \tau}\frac{ p_i {d}_i}{ t_i}.
\end{aligned}
\label{pTAR}
\end{equation}
The pTAR metric reflects the profit efficiency of a solution, measuring how much total reward is achieved per unit of travel cost. A higher pTAR value indicates a strategy that focuses more on rewarding regions with lower travel expense, which correlates with the quality of the constructed trajectory. 
To prevent the model from overfitting to the LP-derived conditional optimum, a repulsive supervisory loss is further defined as:
\begin{equation}
\begin{aligned}
    \mathcal{L}_{pTAR} = - (pTAR(\mathbf{\hat{d}}) - pTAR(\mathbf{d^*}))^2,
\end{aligned}
\label{sl}
\end{equation}


REINFORCE \citep{reinforce} trains the policy $\pi_\theta$, and the overall loss function is designed as a summation of REINFORCE loss and the supervisory loss of service time, expressed as: 
\begin{equation}
\begin{aligned}
    \mathcal{L}_{total} = \beta_1 \mathcal{L} + \beta_2 \mathcal{L}_{pTAR}, 
\end{aligned}
\label{loss}
\end{equation}
where $\beta_1, \beta_2$ denotes the tuning parameter. 




%% file: sections/5-Experiments.tex
\section{Experiments} \label{section:5}

\paragraph{Implementation details.}
We conduct experiments on the OPTWVP benchmark, a typical VRP with
continuous time constraints. The methods are tested on the datasets
with different time window sizes (TW $=100$ and TW $=500$) and node
numbers ($n=50$, $n=100$, and $n=500$).  The training set has 10000
instances for each dataset, and the test set has 10000 instances for
$n=50$ TW $=100$, $n=100$ TW $=100$, and 100 instances for the
others. The network is trained for 800 epochs, with an early stop
strategy of stopping when there is no improvement for 100 epochs.  We
implement the proposed approach in Python using the PyTorch library. 
All runtime experiments are performed on a laptop with NVIDIA GeForce RTX 4070, Intel (R) Core(TM) i7 CPU 2.20GHz to ensure consistency, while all score experiments are performed on both the laptop and a server with NVIDIA
GeForce RTX 5090 GPU, AMD Ryzen 9 9950X 16-Core Processor to confirm robustness across platforms.  
Please refer to Appendix \ref{appendix:experimentdetails} for more
details on the implementation. The code is publicly available at
\url{https://github.com/SwonGao/DeCoST}.


\paragraph{Baselines.}
The proposed approach is compared with three types of baselines:
\begin{enumerate*}[label=\arabic*)]
\item Commercial optimization tool: Gurobi \citep{gurobi}, which implements the branch and cut (B\&C) algorithm to obtain the exact solution.
\item Meta-heuristic algorithms: Greedy-Profit ratio search (Greedy-PRS), a two-stage greedy heuristic that first determines a route by selecting nodes based on the highest unit profit and then optimizes service times using PuLP \citep{pulp} without altering the route; Incremental local search (ILS) \citep{ils}, a metaheuristic search algorithm that implements fast first solution and local search. 
\item Learning-based solvers: Since there are no current NCO methods to solve OPTWVP, we implemented greedy methods and our two-stage approach to deal with service time in POMO \citep{pomo} and GFACS \citep{kim2024ant_NAR}.
\end{enumerate*}

\paragraph{Evaluation metrics.}
Three metrics are adopted to evaluate the performance of the methods:
\begin{enumerate*}[label=\arabic*)]
\item Score: the average total collected rewards of the solutions within the test sets. 
\item Gap: $Gap = (R^* -R)/R^* \times 100\%$ denotes the
average optimality gap between the optimal reward $R^*$ w.r.t. the strong
baseline Gurobi \citep{gurobi} and the reward $R$ obtained within the test
sets. Gurobi is set to run until the global optimum is proven, ensuring that the computed $R^*$ corresponds to the exact optimal solution.
\item Runtime: The average runtime denotes the average inference time required to find a solution for the instance within the test sets. For the ILS method, the reported runtime corresponds to the time to reach the reported score within the search time limit, which is set
to 10 seconds.    
\end{enumerate*}
\begin{table}[t!]
\small
\centering
\caption{
Performance comparison on OPTWVP under different problem sizes and time window settings. 
\textbf{Bold} values denote the best performance in each setting, and \underline{underlined} values denote the second-best. 
Category \textbf{C} refers to NCO methods, and \textbf{H} refers to heuristic baselines. The default unit for runtime is milliseconds (ms).
}
\small
{\renewcommand{\arraystretch}{0.9}
\begin{tabular}{c|l|ccc|ccc}
\toprule
\small
\multirow{2}{*}{\textbf{Cat.}} & 
\multirow{2}{*}{\textbf{Method}} & 
\multicolumn{3}{c|}{{$n = 50$, TW $= 100$}} & 
\multicolumn{3}{c}{{$n = 100$, TW $= 100$}} \\
& & \textbf{Score} $\uparrow$ & \textbf{Gap} $\downarrow$ & \textbf{Runtime} $\downarrow$ & 
\textbf{Score} $\uparrow$ & \textbf{Gap} $\downarrow$ & \textbf{Runtime} $\downarrow$ \\
\midrule
-- & B\&C & 15.2 & 0.00\% & 200 & 26.1 & 0.00\% & 1023 \\
\midrule
\multirow{2}{*}{H} 
& Greedy-PRS & 12.9 & 15.7\% & 44 & 21.9 & 16.2\% & 48 \\
& ILS & \underline{14.5} & \underline{4.34\%} & 2109 & 24.9 & 4.2\% & 7122 \\
\midrule
\multirow{4}{*}{C} 
& GFACS (Greedy) & 11.1 & 25.7\% & 94.2 & 22.6 & 13.4\% & 98.2 \\
& GFACS & 12.3 & 18.6\% & 4310 & \underline{25.2} & \underline{3.38\%} & 6760 \\
& POMO & 11.3 & 25.3\% & 46.6 & 11.5 & 55.7\% & 52.3 \\
& \textbf{DeCoST (Ours)} & \textbf{15.1} & \textbf{1.06\%} & 94.0 & \textbf{25.6} & \textbf{1.97\%} & 158 \\

\midrule
\multirow{2}{*}{\textbf{Cat.}} & 
\multirow{2}{*}{\textbf{Method}} & 
\multicolumn{3}{c|}{{$n = 50$, TW $= 500$}} & 
\multicolumn{3}{c}{{$n = 100$, TW $= 500$}} \\
& & \textbf{Score} $\uparrow$ & \textbf{Gap} $\downarrow$ & \textbf{Runtime} $\downarrow$ & 
\textbf{Score} $\uparrow$ & \textbf{Gap} $\downarrow$ & \textbf{Runtime} $\downarrow$ \\
\midrule
-- & B\&C & 51.9 & 0.00\% & 1346s & -- & -- & >24h \\
\midrule
\multirow{2}{*}{H}
& Greedy-PRS & 46.1 & 11.4\% & 46.1 & 79.7 & -- & 51 \\
& ILS & \underline{47.8} & \underline{7.82\%} & 1262 & \underline{85.0} & -- & 8441 \\
\midrule
\multirow{4}{*}{C}
& GFACS (Greedy) & 35.21 & 31.7\% & 89.8 & 60.5 & -- & 97.8 \\
& GFACS & 47.4 & 8.57\% & 6950 & 84.4 & -- & 8750 \\
& POMO & 31.3 & 39.6\% & 40.7 & 54.6 & -- & 60.6 \\
& \textbf{DeCoST (Ours)} & \textbf{51.4} & \textbf{0.83\%} & 60.4 & \textbf{88.1} & -- & 100 \\
\bottomrule
\end{tabular}
}
\vspace{-20pt}
\label{tab:comparison}
\end{table}


\subsection{Performance on Solving OPs with Time Constraints}

To comprehensively evaluate the performance of the DeCoST framework on the extended OPTWVP problem, we compare it against several representative baselines, with results summarized in Table~\ref{tab:comparison}. 
The results show that DeCoST consistently achieves strong performance
in all settings. 
It outperforms other heuristic and NCO methods in terms of solution quality (Score and Gap), while maintaining high
computational efficiency. 
Adding the STO module significantly improves the NCO methods. For example, GFACS with STO reduces the gap from 13.4\%
to 3.38\% in the setting $n = 100$, TW $= 100$. The increased runtime
(up to $6760$ms) is mainly due to the non-autoregressive structure of GFACS, which limits the batch parallelism of STO and forces more
frequent optimization of single instances. 
Although runtime rises, the substantial increase in solution quality demonstrates that precise optimization of service time is crucial for achieving a better solution. 
To ensure a fair comparison, the STO module is only integrated into NCO frameworks, which lack native mechanisms for service time allocation. We do not apply STO to heuristic baselines like ILS or Greedy-PRS, as they already incorporate built-in optimization strategies. 
Compared to strong meta-heuristic baselines like ILS, DeCoST yields better or comparable solution quality and improves inference efficiency by a factor of 20 to 45. We also include supplementary experiments on comparison with ILS with search time limit >10s in Appendix \ref{appendix:suppleexp:ils}.

DeCoST also demonstrates strong performance and scalability in solving
large-scale OPTWVP instances. For instance, with $n=500$, as shown in
Table \ref{tab:comparisonn500}, DeCoST maintains high solution quality
while requiring only 1329ms on average—significantly faster than ILS,
which takes $8803$ms. Although a slight degradation in solution
quality is expected as the problem size increases—a common trend in
combinatorial optimization, DeCoST consistently outperforms all
competing baselines.

Figure~\ref{gap_comparison} reveals the distribution of the
optimality gap of different methods under different settings (Refer to Appendix \ref{appendix:suppleexp:disturb} for more details). It can
be observed that DeCoST shows the smallest optimality gap in all
settings, with a low box height and fewer outliers, indicating that
the method always maintains high solution quality and consistency on
multiple instances and has significant stability.

Finally, we also include extension experiments on Team OPTWVP, sensitivity analysis with respect to cost function parameters, performance evaluation on initial service times, efficiency study of pTAR loss in Appendix \ref{appendix:suppleexp:top}, \ref{appendix:suppleexp:senitivity}, \ref{appendix:suppleexp:initial service time}, and \ref{appendix:suppleexp:ptar}, respectively.

\begin{table}[]
\small
\centering
\caption{ Performance on OPTWVP with large-scale configuration
  ($n = 500$, TW $= 100$).  \textbf{Bold} values indicate the best
  result.}
{\renewcommand{\arraystretch}{0.8}
\begin{tabular}{l|ccc}
\toprule
\textbf{Method} & \textbf{Score} $\uparrow$ & \textbf{Gap} $\downarrow$ & \textbf{Runtime (ms)} $\downarrow$ \\
\midrule
B\&C & 82.3 & 0.00\% & 68400 \\
Greedy-PRS & 69.6 & 15.6\% & 60 \\
ILS & 78.2 & 4.98\% & 8803 \\
\midrule
GFACS (Greedy) & 67.4 & 18.1\% & 112 \\
GFACS & 73.1 & 11.3\% & 9420 \\
POMO & 58.6 & 28.8\% & 747 \\
\textbf{DeCoST (Ours)} & \textbf{79.6} & \textbf{3.31\%} & 1329 \\
\bottomrule
\end{tabular}
}
\vspace{-8pt}
\label{tab:comparisonn500}
\end{table}

\begin{figure}[t!]
    \centering
    \vspace{-6pt}
        \includegraphics[width=0.8\textwidth]{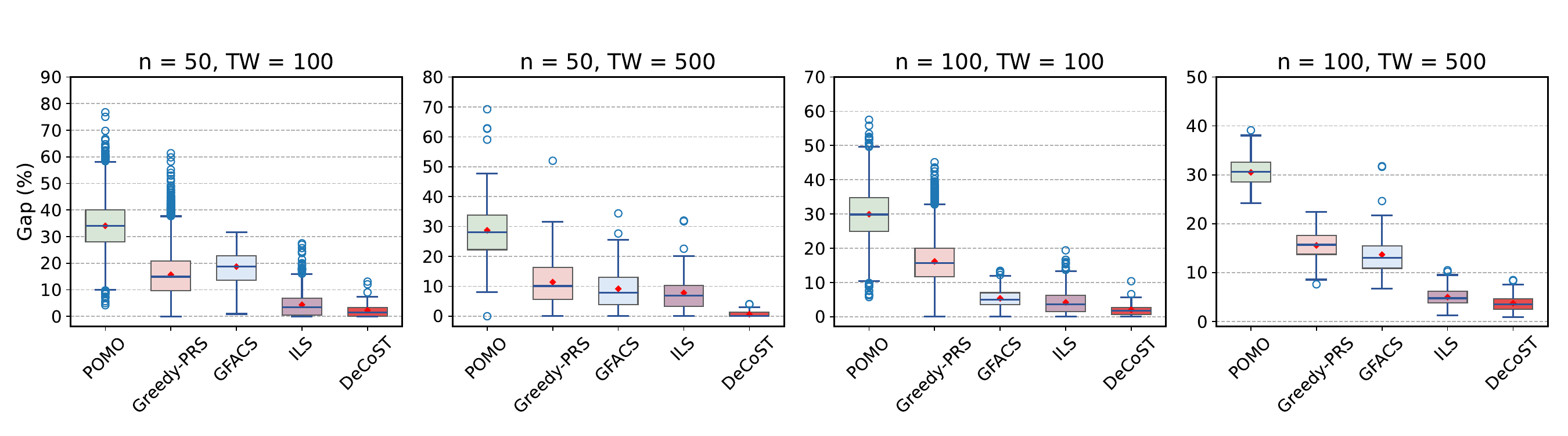}
    \vspace{-15pt}
    \caption{Boxplot of the optimality gap under different problem sizes and time window settings.}
    \vspace{-14pt}
    \label{gap_comparison}
\end{figure}

\subsection{Performance on DeCoST with Module Ablations}
\begin{table}[t!]
\small
\centering
\caption{
Ablation study on DeCoST ($n = 50$).  
\textbf{Bold} numbers denote the best in each block (TW).}
\scalebox{0.8}{\renewcommand{\arraystretch}{0.9} 
    \begin{tabular}{lccc|ccc}
    \toprule
    \textbf{Method / Setting} & \textbf{SE} & \textbf{STO} & \textbf{SL} & \textbf{Score} $\uparrow$ & \textbf{Gap} $\downarrow$ & \textbf{Runtime (ms)} $\downarrow$ \\
    \midrule
    \multicolumn{7}{c}{\textbf{TW = 100}} \\
    \midrule
    Baseline           & \xmark & \xmark & \xmark & 11.3 & 25.3\% & 46.6 \\
    Baseline + SE      & \cmark & \xmark & \xmark & 11.6 & 23.0\% & 46.7 \\
    Baseline + SE + STO& \cmark & \cmark & \xmark & 14.9 &  2.28\% & 77.5 \\
    \textbf{DeCoST (ours)} & \cmark & \cmark & \cmark & \textbf{15.1} & \textbf{1.06\%} & 94.0 \\
    \midrule
    \multicolumn{7}{c}{\textbf{TW = 500}} \\
    \midrule
    Baseline           & \xmark & \xmark & \xmark & 31.3 & 39.6\% & 40.7 \\
    Baseline + SE      & \cmark & \xmark & \xmark & 35.5 & 31.2\% & 39.9 \\
    Baseline + SE + STO& \cmark & \cmark & \xmark & 50.9 &  1.88\% & 55.6 \\
    \textbf{DeCoST (ours)} & \cmark & \cmark & \cmark & \textbf{51.4} & \textbf{0.83\%} & 60.4 \\
    \bottomrule
    \end{tabular}
}
\label{tab:ablation_50}
\end{table}

Table \ref{tab:ablation_50} shows the ablation results of the three
components in the DeCoST framework: the spatial encoder (SE) that
incorporates node-edge features, the STO module, and the supervised loss (SL) guided by our proposed pTAR
metric. Experiments are conducted on two time window settings. As
shown in the results, the baseline model (without any of the three
modules) performs the worst, with Gap values reaching 25.3\% and
39.6\%. Introducing the SE module yields moderate performance gains
without noticeable runtime overhead, indicating the positive impact of
geometric structure encoding on routing decisions. However, the most
substantial improvement comes from the STO module, which significantly
reduces the Gap to 2.28\% and 1.88\% under the two settings, with only
about $30$ms of additional runtime. When further augmented with the
SL module, the Gap decreases to 1.06\% and 0.83\% in the TW $= 500$
setting, demonstrating that supervised guidance from pTAR can further
refine the allocation of service time and improve the quality of the solution. In
summary, STO plays a decisive role in improving performance, while SE
and SL provide complementary enhancements. The synergy among the three
components enables DeCoST to deliver near-optimal solution quality
with minimal inference time, validating the effectiveness and
coherence of the overall design.

\subsection{Performance on Solomon100}
\vspace{-5pt}
\begin{wraptable}{r}{0.35\textwidth}
\vspace{-5pt}
\input{solomon_table}
\end{wraptable}

We also evaluated DeCoST with ILS on the data set
Solomon100\footnote{\url{https://www.sintef.no/projectweb/top/vrptw/solomon-benchmark/}}, a widely used standard dataset in the vehicle routing problem, 
with the same configuration of \citep{ils}. The Solomon100 dataset
consists of 101-node TOPTWVP instances. Due to the limited number of
available samples, we select 29 instances for training and 10
instances for testing, with a training schedule of 1000 epochs. The
training and validation batch sizes are set to 2 and 1,
respectively. The performance is summarized in
Table~\ref{tab:comparison_solomon}. DeCoST consistently outperforms
ILS on this real-world dataset under both $10$~s and $20$~s time
limits. Notably, DeCoST achieves high-quality solutions at
approximately 34.5 times faster inference speed compared to ILS.

%% file: solomon_table.tex
\centering
\caption{Performance comparison on OPTWVP on Solomon 100.}
\renewcommand\arraystretch{1.2}
\resizebox{0.35\textwidth}{!}{
    \begin{tabular}{l|cc}
    \toprule
    \textbf{Method} & \textbf{Score} $\uparrow$ & \textbf{Gap} $\downarrow$ \\
    \midrule
    \gao{B\&C} & \gao{-} & \gao{-} \\ 
    ILS (T=10s) & 9846.3 & -7.8\% \\
    ILS (T=20s) & 10616.9 & -0.59\%\\
    DeCoST (T=0.29s) & \textbf{10680.3} & -- \\
    \bottomrule
    \end{tabular}

    
    

}
\label{tab:comparison_solomon}

%% file: sections/6-Conclusions.tex
\section{Conclusion and future work}\label{section:6}
This paper proposes a two-stage DeCoST framework to explicitly decouple discrete and continuous decision variables for the OPTWVP: the first stage fixes the routing trajectory and the second stage optimizes the service times through STO. A mathematical proof demonstrates the global optimality of the second-stage solution. 
Furthermore, a repulsive supervisory metric, pTAR, incorporates optimal service times into the first-stage trajectory decision to learn a global long-horizon structure estimation, thereby improving the overall quality and efficiency of the solution. 
Experiments on OPTWVP instances demonstrate that the proposed method outperforms both state-of-the-art NCO methods and the latest meta-heuristic algorithms in terms of solution quality and computational efficiency. 
Although our method demonstrates strong accuracy in both AR and NAR solvers, a limitation arises when integrating with NAR approaches: the lack of batch processing limits STO's parallelism, thereby impacting overall efficiency. Future work will focus on improving computational efficiency in such settings and extending the framework to more general vehicle routing scenarios.

%% file: sections/7-appendix.tex
\newpage
\section*{LLM Usage Disclosure}
We use LLM as a writing aid for grammar checking and for polishing the Introduction. The LLM did not generate technical content, experiments, or results. All claims, data, equations, and code were authored, verified, and are the responsibility of the human authors. No confidential, proprietary, or identifying information was shared with the LLM.
\section{Proof of Theorem 4.1} \label{appendix:proof_theorem1}
In this paper, we propose a service time optimization algorithm to obtain an optimal solution for the second-stage service time allocation. The details of the proof are given by: 
\begin{proof}
Consider the Lagrangian of the problem (\ref{service_time_opt_problem}) in the matrix form:
\begin{equation}
    \begin{aligned}
    \mathscr{L}(\mathbf{s,d, \lambda}) = - 
    \begin{bmatrix} \mathbf{0} & \mathbf{p}^T \end{bmatrix}
    \begin{bmatrix} \mathbf{s} \\ \mathbf{d} \end{bmatrix}
    + \begin{bmatrix} \mathbf{\lambda_1}^T & \mathbf{\lambda_2}^T & \mathbf{\lambda_3}^T & \mathbf{\lambda_4}^T & \mathbf{\lambda_5}^T \end{bmatrix} \begin{bmatrix} \mathbf{\tau^- - s} \\ \mathbf{s - \tau^+} \\ \mathbf{-d} \\ \mathbf{d - \tau^+ + \tau^-} \\ (I-U)\mathbf{ s + d + t }\end{bmatrix},
    \end{aligned}
    \label{langranian}
\end{equation}
where $\mathbf{s,d}$ are the start, service time of the given trajectory $\tau$ from Algorithm \ref{service_time_opt_algorithm}, $U$ denotes the upper shift matrix, and $\mathbf{\lambda = [\lambda_1, \lambda_2,\lambda_3,\lambda_4,\lambda_5]}^T$ denotes the Lagrangian multipliers of the corresponding constraints. 

Since problem (\ref{service_time_opt_problem}) is a constrained convex optimization problem, as long as there exists a solution $\mathbf{(s,d,\lambda)}$ such that satisfies the Karush–Kuhn–Tucker (KKT) conditions, the solution $\mathbf{(s,d), \lambda}$ are the optimal solution of the primal and dual problem, respectively \cite{boyd2004convex}. 

The KKT conditions of problem (\ref{service_time_opt_problem}) can be expressed as: 
\begin{equation}
    \begin{aligned}
        s^-_i \leq s_i \leq s^+_i, \\
        0 \leq d_i \leq d_{i\max}, \\
        s_i + d_i + t_{i} - s_{i+1} \leq 0, \\
        \lambda_{1,i}(\tau^-_i - s_i) = 0 \\
        \lambda_{2,i}(s_i - \tau^+_i) = 0 \\
        \lambda_{3,i}(-d_i) = 0 \\
        \lambda_{4,i}(d_i - d_{i\max}) = 0\\
        \lambda_{5,i}(s_i + d_i + t_{i} - s_{i+1}) = 0\\
        - \lambda_{1,i} + \lambda_{2,i}+ \lambda_{5,i} - \lambda_{5,i-1} = 0\\
        - \lambda_{3,i} + \lambda_{4,i} + \lambda_{5,i} = p_i
    \end{aligned}
    \label{KKT}
\end{equation}

Consider an arbitrary solution $\mathbf{s,d}$ from algorithm \ref{service_time_opt_algorithm}; it is a feasible solution $\mathbf{s,d}$ for the primal problem (\ref{service_time_opt_problem}). The optimality of the solution can be guaranteed by finding a feasible solution $\mathbf{\lambda}$ for the dual problem.  

According to the line 9 of the algorithm, the selected nodes in the route $\tau$ can be separated into three categories:  $\overline{\mathcal{V}} = \{v_i | d_i = d_i\max = s^+-s^-, v_i \in \tau \}$, $\hat{\mathcal{V}} = \{v_i | d_i =  s_{i+l+1}^+ - s_i - \sum_{j=1}^{l} d_{i+j} - \sum_{j=1}^{l+1} t_{i+j}, v_i \in \tau\} $, and $\underline{\mathcal{V}} = \{v_i | d_i = 0, v_i \in \tau \}$. 

For $i \in \overline{V}$, $v_i$ is not constrained by the time windows of other vertices, according to the complementary slackness and gradient condition:
\begin{equation}
\begin{aligned}
-\lambda_{1,i} + \lambda_{2,i} + \lambda_{5,i}&= \lambda_{5,i-1}, \\
\lambda_{3,i} &= 0, \\
\lambda_{4,i} + \lambda_{5,i} &= p_i,
\end{aligned}
\label{v_full}
\end{equation}

For $i \in \hat{V}$, assume that the $d_i$ is constrained by the end of the $i+l+1$ th node's time window, i.e. 
\begin{equation}
\begin{aligned}
d_i =  \tau_{i+l+1}^+ - s_i - \sum_{j=1}^{l} d_{i+j} - \sum_{j=1}^{l+1} t_{i+j},
\end{aligned}
\end{equation}
which indicates that the route from $i$ to $i+l$ satisfies:
\begin{equation}
\begin{aligned}
s_k + d_k + t_{k} &= s_{k+1}, k = i, i+1, \cdots, i+l,  \\
s_{i+l+1} &= \tau_{i+l+1}^+,
\end{aligned}
\end{equation}
which indicates that the $i$ th vertex to the $i+l$ th vertex are constrained by the end time of $ i+l+1$th vertex's time window. The corresponding gradient condition and complementary slackness from the $i$ th vertex to the $i+l+1$ th vertex:
\begin{equation}
\begin{aligned}
\lambda_{5k} &= p_k + \lambda_{3k} - \lambda_{4k}, \\
\lambda_{5k} &= \lambda_{1k} -\lambda_{2k} + \lambda_{5k-1}, k = i, \cdots,i+l \\
\lambda_{1i+l+1} &= 0.
\end{aligned}
\label{v_half}
\end{equation}

For $i\in \underline{V}$, according to the gradient condition and complementary slackness:
\begin{equation}
\begin{aligned}
\lambda_{4i} &= 0,\\
\lambda_{5i} &= p_i + \lambda_{3i}, \\
\lambda_{5i} &= \lambda_{1i} -\lambda_{2i} + \lambda_{5i-1}.
\end{aligned}
\label{v_under}
\end{equation}

$\forall i \in \overline{V}, \hat{V}, \underline{V}$, the feasibility of the corresponding Lagrange multiplier $(\lambda_{1i},\lambda_{2i},\lambda_{3i},\lambda_{4i},\lambda_{5i})$ relies solely on the value of $\lambda_{5i-1}$ and satisfies (\ref{v_full}), (\ref{v_half}), and (\ref{v_under}), respectively. So, in the following, different sub-trajectories are constructed to evaluate the feasibility of different sub-trajectories. 

Given that each node $i \in \overline{V}, \hat{V}, \underline{V}$ corresponds to a Lagrange multiplier tuple $(\lambda_{1i}, \lambda_{2i}, \lambda_{3i}, \lambda_{4i}, \lambda_{5i})$ whose feasibility depends solely on the value of $\lambda_{5,i-1}$ and satisfies conditions (\ref{v_full}), (\ref{v_half}), and (\ref{v_under}), respectively, the trajectory $\tau$ can be categorized into four types of sub-trajectories. As the feasibility of each node can be recursively determined from the preceding node via $\lambda_{5,i-1}$, it follows that: 
If each sub-trajectory is locally feasible with respect to its starting value $\lambda_{5,i-1}$, then the entire trajectory $\tau$ is globally feasible.

In the following sections, the feasibility of all possible sub-trajectories is validated by finding a feasible solution that holds sequentially under the recursive propagation of $\mathbf{\lambda_5}$.

\paragraph{Case 1: $\{V/\overline{V}\} \rightarrow \overline{V}$}
Consider the case where a sub-trajectory $[i,i+l] \subseteq \{V/\overline{V}\}$ is followed by node $i+l+1 \in \overline{V}$, KKT conditions yields: 
\begin{equation}
\begin{aligned}
\lambda_{5k} &= p_k + \lambda_{3k} - \lambda_{4k}, \\
\lambda_{5k} &= \lambda_{1k} -\lambda_{2k} + \lambda_{5k-1}, k = i, \cdots,i+l, \\
-\lambda_{1,i+l+1} + \lambda_{2,i+l+1} + \lambda_{5,i+l+1}&= \lambda_{5,i+l}, \\
\lambda_{3,i+l+1} &= 0, \\
\lambda_{4,i+l+1} + \lambda_{5,i+l+1} &= p_i.
\end{aligned}
\end{equation}

$\forall \lambda_{5i-1} \in [0, p_{i+l+1}]$, $\exists (\lambda_{1k},\lambda_{2k},\lambda_{3k},\lambda_{4k},\lambda_{5k}), k \in [i, i+l+1]$ is the feasible solution of sub-trajectory from $i$ to $i+l+1$:
\begin{equation}
\begin{aligned}
\lambda_{1,k} &=\lambda_{2,k}=\lambda_{4,k} = 0, \\
\lambda_{3,k} &= p_{i} - p_{k} + \lambda_{3,i},\\ 
\lambda_{5,k} &= \lambda_{5,k-1}, k = i, \cdots,i+l,\\
\lambda_{1,i+l+1} &= \lambda_{2,i+l+1} = \lambda_{3,i+l+1}, \\
\lambda_{5,i+l+1} &= \lambda_{5,i+l}, \\
\lambda_{4,i+l+1} &= p_{i+l+1} - \lambda_{5,i-1}, \\
\end{aligned}
\end{equation}
such that it satisfies the KKT condition.
\paragraph{Case 2: $\{V/\overline{V}\} \rightarrow \{V/\overline{V}\}$} Consider the case where a sub-trajectory $[i,i+l_1] \subseteq \{V/\overline{V}\}$ is followed by another sub-trajectory $[i+l_1+1, i+l_1 +l_2+1]\subseteq \{V/\overline{V}\}$, KKT conditions yields: 
\begin{equation}
\begin{aligned}
\lambda_{5k} &= p_k + \lambda_{3k} - \lambda_{4k}, \\
\lambda_{5k} &= \lambda_{1k} -\lambda_{2k} + \lambda_{5k-1}, k = i, \cdots, i+l_1+l_2, \\
\lambda_{1i+l_1+1} &= 0.
\end{aligned}
\end{equation}
$\forall \lambda_{5i-1} \geq 0$, there exists a feasible solution $(\lambda_{1k},\lambda_{2k},\lambda_{3k},\lambda_{4k},\lambda_{5k}), k \in [i, i+l_1+l_2+1]$ of sub-trajectory from $i$ to $i+l_1+l_2+1$, such that:
\begin{equation}
\begin{aligned}
\lambda_{1k} &=\lambda_{2k}=\lambda_{4k} = 0, \\
\lambda_{3k} &= p_{i} - p_{k} + \lambda_{3i},\\ 
\lambda_{5k} &= \lambda_{5k-1},  k = i, \cdots, i+l_1+l_2,
\end{aligned}
\end{equation}
which satisfies the KKT condition.

\paragraph{Case 3: $ \overline{V} \rightarrow \overline{V}$}
Consider the case where a node $i \in \overline{V}$ is followed by node $i+1 \in \overline{V}$, KKT conditions yields: 
\begin{equation}
\begin{aligned}
-\lambda_{1,i} + \lambda_{2,i} + \lambda_{5,i}&= \lambda_{5,i-1}, \\
\lambda_{3,i} &= 0, \\
\lambda_{4,i} + \lambda_{5,i} &= p_i,\\
-\lambda_{1,i+1} + \lambda_{2,i+1} + \lambda_{5,i+1}&= \lambda_{5,i}, \\
\lambda_{3,i+1} &= 0, \\
\lambda_{4,i+1} + \lambda_{5,i+1} &= p_{i+1},\\
\end{aligned}
\end{equation}

$\forall \lambda_{5i-1} \in [0, \min\{p_i, p_{i+1}\}]$, there exists a feasible solution $(\lambda_{1k},\lambda_{2k},\lambda_{3k},\lambda_{4k},\lambda_{5k}), k \in [i, i+1]$ of sub-trajectory from $i$ to $i+l+1$, such that:
\begin{equation}
\begin{aligned}
\lambda_{1,i} &= \lambda_{2,i} = \lambda_{3,i}= 0, \\
\lambda_{5,i} &=\lambda_{5,i-1}, \\
\lambda_{4,i} &= p_i - \lambda_{5,i-1}, \\
\lambda_{1,i+1} &= \lambda_{2,i+1} = \lambda_{3,i+1} = 0, \\
\lambda_{5,i+1} &=\lambda_{5,i-1}, \\
\lambda_{4,i+1} &= p_{i+1} - \lambda_{5,i-1},
\end{aligned}
\end{equation}
which satisfies the KKT condition.

\paragraph{Case 4: $\overline{V} \rightarrow \{V/\overline{V}\} $}
Consider the case where a node $i \in \overline{V}$ is followed by 
a sub-trajectory $[i+1,i+l+1] \subseteq \{V/\overline{V}\}$, KKT conditions yields: 
\begin{equation}
\begin{aligned}
-\lambda_{1,i} + \lambda_{2,i} + \lambda_{5,i}&= \lambda_{5,i-1}, \\
\lambda_{3,i} &= 0, \\
\lambda_{4,i} + \lambda_{5,i} &= p_i,\\
 -\lambda_{3,k} + \lambda_{4,k} + \lambda_{5,k}&= p_k , \\
\lambda_{5,k} -\lambda_{1,k} +\lambda_{2,k} &= \lambda_{5,k-1}, k \in [i+1, i+l+1].
\end{aligned}
\end{equation}
$\forall \lambda_{5i-1} \in [0,p_i]$, there exists a feasible solution $(\lambda_{1k},\lambda_{2k},\lambda_{3k},\lambda_{4k},\lambda_{5k}), k \in [i, i+l+1]$ of sub-trajectory from $i$ to $i+l+1$ such that:
\begin{equation}
\begin{aligned}
\lambda_{1,i} &=\lambda_{2,i}= \lambda_{3,i} = 0 \\
\lambda_{5,i} &= \lambda_{5,i-1}, \\
\lambda_{4,i} &= p_{i} - \lambda_{5,i}, \\
\lambda_{1,k} &=\lambda_{2,k}=\lambda_{4,k} = 0, \\
\lambda_{3,k} &= p_{i} - p_{k} + \lambda_{3,i},\\ 
\lambda_{5,k} &= \lambda_{5,k-1}, k \in [i, i+l+1],\\
\end{aligned}
\end{equation}
which satisfies the KKT condition.

To this end, given an arbitrary trajectory $\tau$ and the solution $(s,d)$ from algorithm \ref{service_time_opt_algorithm}, $\forall \lambda_{5k} \in \min_{i\in[0,l]}\{p_i\}, k \in [0,l]$, there exists a feasible $\mathbf{\lambda}$ such that $(s,d,\lambda)$ satisfies the KKT conditions, which demonstrates the optimality of the primal and dual problems. 
\end{proof}

\section{Supplementary Experiments}



\subsection{Results on running the ILS for longer than 10 seconds}  \label{appendix:suppleexp:ils}
To ensure a stronger comparison between ILS and DeCoST, we additionally evaluated ILS with extended time budgets of 20s, 50s, and 200s (partially), as shown in the following table. The results demonstrate that, while ILS indeed finds slightly better solutions as time increases (10s -> 200s), the improvement remains marginal, and it still does not match the performance of our proposed method, which, in addition, exhibits significantly higher efficiency.

\begin{table}[!h]
\small
\centering
\caption{Performance comparison on OPTWVP under different problem
  sizes and time window settings.  \textbf{Bold} values denote the
  best performance in each setting.  Category \textbf{C} refers to NCO methods,
  and \textbf{H} refers to heuristic baselines. The default unit for
  runtime is milliseconds (ms).}
\small
{\renewcommand{\arraystretch}{0.9}
\begin{tabular}{c|l|ccc|ccc}
\toprule
\small
\multirow{2}{*}{\textbf{Cat.}} & 
\multirow{2}{*}{\textbf{Method}} & 
\multicolumn{3}{c|}{{$n = 50$, TW $= 100$}} & 
\multicolumn{3}{c}{{$n = 100$, TW $= 100$}} \\
& & \textbf{Score} $\uparrow$ & \textbf{Gap} $\downarrow$ & \textbf{Runtime} $\downarrow$ & 
\textbf{Score} $\uparrow$ & \textbf{Gap} $\downarrow$ & \textbf{Runtime} $\downarrow$ \\
\midrule
-- & B\&C & 15.2 & 0.00\% & 200 & 26.1 & 0.00\% & 1023 \\
\midrule
\multirow{4}{*}{H} 
& ILS ($10$ s) & 14.5 & 4.34\% & 2109 & 24.9 & 4.2\% & 7122 \\
& ILS ($20$ s) & 14.5 & 4.34\% & 2120 & 24.9 & 4.2\% & 7162\\
& ILS ($50$ s) & 14.5 & 4.34\% & 3620 & 25.0 & 4.1\% & 12438\\
& ILS ($200$ s) & - & - & - & - & - & - \\
\midrule
\multirow{1}{*}{C} 
& \textbf{DeCoST (Ours)} & \textbf{15.1} & \textbf{1.06\%} & 94.0 & \textbf{25.6} & \textbf{1.97\%} & 158 \\

\midrule
\multirow{2}{*}{\textbf{Cat.}} & 
\multirow{2}{*}{\textbf{Method}} & 
\multicolumn{3}{c|}{{$n = 50$, TW $= 500$}} & 
\multicolumn{3}{c}{{$n = 100$, TW $= 500$}} \\
& & \textbf{Score} $\uparrow$ & \textbf{Gap} $\downarrow$ & \textbf{Runtime} $\downarrow$ & 
\textbf{Score} $\uparrow$ & \textbf{Gap} $\downarrow$ & \textbf{Runtime} $\downarrow$ \\
\midrule
-- & B\&C & 51.9 & 0.00\% & 1346s & -- & -- & >24h \\
\midrule
\multirow{4}{*}{H}
& ILS ($10$ s)& \underline{47.8} & \underline{7.82\%} & 1262 & \underline{85.0} & -- & 8441 \\
& ILS ($20$ s) & 47.8 & 7.82\% & 1270 & 83.5 & - & 8305 \\
& ILS ($50$ s) & 47.8 & 7.82\% & 2474 & 83.7& - & 12407 \\
& ILS ($200$ s)& 47.8 & 7.82\% & 8606 & 83.7 & - & 19045 \\
\midrule
\multirow{1}{*}{C}
& \textbf{DeCoST (Ours)} & \textbf{51.4} & \textbf{0.83\%} & 60.4 & \textbf{88.1} & -- & 100 \\
\bottomrule
\end{tabular}
}
\end{table}

\subsection{Distributional characteristics / robustness of performance across instances} \label{appendix:suppleexp:disturb}
The following table reports the characteristics of the gap distribution in four different datasets (n=50,tw=100; n=50,tw=500; n=100,tw=100; n=500,tw=100).
\begin{table}[h]
    \centering
    \caption{Distributional characteristics of gap performance}
    \begin{tabular}{c|ccccccc}
        \toprule
        Method &  mean & std & median & q25 & q75 & min & max \\
        \midrule
        POMO         & 31.946 & 8.432 & 31.666 & 26.318 & 37.363 & 0.000 & 76.748 \\
        Greedy-PRS   & 15.904 & 7.302 & 15.343 & 10.740 & 20.395 & 0.000 & 61.360 \\
        GFACS        & 11.728 & 7.300 & 11.210 &  5.597 & 16.111 &  0.000 & 34.373 \\
        ILS          &  4.465 & 4.111 &  3.743 &  1.207 &  6.548 & 0.000 & 31.978 \\
        DeCoST       &  \textbf{2.172} & \textbf{2.131} & \textbf{ 1.830} &  \textbf{0.184} &  \textbf{3.393} & \textbf{0.000} & \textbf{13.070} \\
        \bottomrule
    \end{tabular}
\end{table}

As shown in the table, our method DeCoST not only achieves the lowest mean value (2.172), but also demonstrates the smallest standard deviation (2.131) among all methods, indicating its overall superior performance and stability. Compared to other baselines, DeCoST yields more stable and consistent results across different instances, as reflected by its smaller interquartile range (q25–q75) and narrower minimum–maximum span. 

\subsection{Extension on Team OPTWVP} 
For OPTWVP, the challenge lies in the complex coupling between discrete path selection and continuous service time allocation, which significantly expands the search space, making it difficult to quickly find high-quality solutions. 

Team OPTWVP shares the similar challenge as the OPTWVP. As shown in Table \ref{tab:TOPTWVPcomparison}, the additional dimension of considering multiple vehicles does not change the core challenge of the discrete-continuous coupling inherent in the problem. Therefore, our method is naturally generalisable to the team problem extension. 

\label{appendix:suppleexp:top}
\begin{table}[h]
\small
\centering
\caption{
Performance comparison on TOPTWVP under $n = 50$, TW $= 100$ settings. 
\textbf{Bold} values denote the best performance in each setting, and \underline{underlined} values denote the second-best. 
Category \textbf{C} refers to NCO methods, and \textbf{H} refers to heuristic baselines. 
}
\begin{threeparttable} 
{\renewcommand{\arraystretch}{0.9}
\begin{tabular}{c|l|cc}
\toprule
\multirow{2}{*}{\textbf{Cat.}} & \multirow{2}{*}{\textbf{Method}} & \multicolumn{2}{c}{{$n = 50$, TW $= 100$}}\\ 
& & \textbf{Score} $\uparrow$ & \textbf{$\Delta$ (\%) } $\uparrow$ \\
\midrule
-- & B\&C & \multicolumn{2}{c}{- \tnote{a}} \\ 
\midrule
\multirow{1}{*}{H} 
& ILS & 31.83 & -11.27 \\
\midrule
\multirow{2}{*}{C} 
& POMO & \underline{35.87} & - \\ 
& \textbf{DeCoST (Ours)} & \textbf{36.99} & \textbf{+3.12} \\
\bottomrule
\end{tabular}
}
\begin{tablenotes} 
    \item[a] Not applicable as the runtime exceeds one hour per instance.
\end{tablenotes}
\end{threeparttable}
\vspace{-10pt}
\label{tab:TOPTWVPcomparison}
\end{table}

\subsection{Sensitivity analysis} \label{appendix:suppleexp:senitivity}
We conducted a comprehensive sensitivity analysis to evaluate the robustness of our method with respect to the parameters of the cost function $\beta_1$ and $\beta_2$ (default is 1000, 1000). Both parameters were systematically varied across the range [100, 500, 1000, 2000, 5000], and the resulting performance gaps are summarised in the table below.
\begin{table}[h]
    \centering
    \caption{Sensitivity analysis of DeCoST with $n = 50$, TW $= 100$}
    \begin{tabular}{c|cccccc}
        \toprule
        $\beta_1 / \beta_2$ &100 &500 &1000 &2000 & 5000\\ 
        \midrule
        100 & 0.95\% &0.88\% & 0.83\% & 0.88\% & 0.92\%\\
        500 & 0.91\% & 0.96\% & 0.89\% & 0.83\% & 0.84\% \\
        1000 & 0.87\% & 0.92\% & 0.84\% & 0.96\% & 0.88\% \\
        2000 & 0.92\% & 0.81\% &0.84\% & 0.97\% & 0.84\%\\
        5000 & 0.88\% & 0.93\% & 0.83\% & 0.92\% & 0.94\% \\
        \bottomrule
    \end{tabular}
\end{table}
Across all combinations tested, the performance gap remains within the narrow range of [0.81\%, 0.97\%]. The mean performance gap over the 25 tested combinations is 0.89\%, with a standard deviation of 0.05\%, demonstrating that our method is stable and largely insensitive to moderate changes in the cost function parameters. This robustness supports the reliability of our approach under varying cost configurations.

\subsection{Performance evaluation of the initial service time} \label{appendix:suppleexp:initial service time}
\begin{wrapfigure}[12]{r}{0.34\textwidth}
  \centering
    \includegraphics[width=0.34\textwidth]{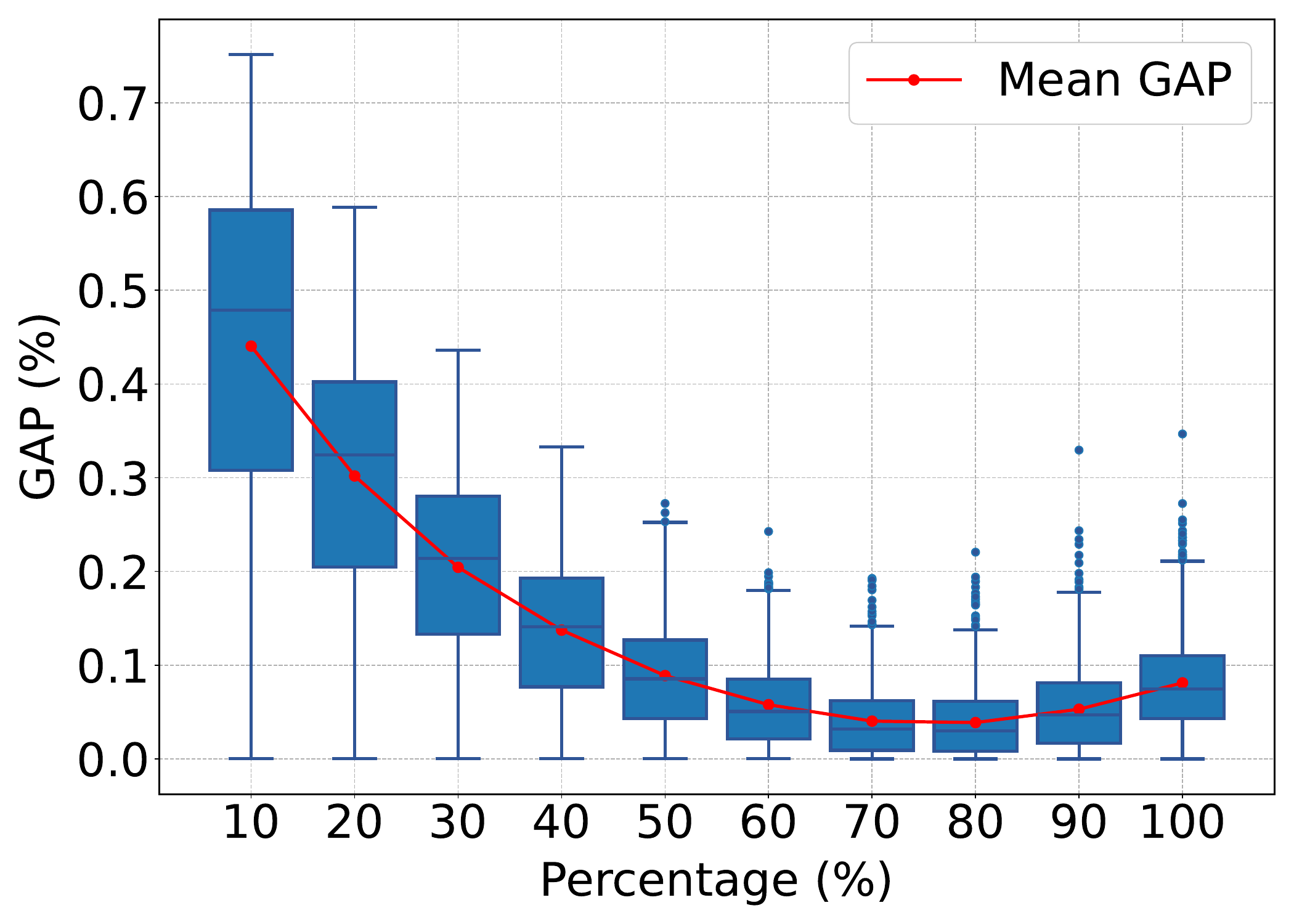}
   \vspace{-25pt}
    \caption{\small{Analysis of initial service time ratio versus optimality gap.}}
    \label{percent_gap_analysis}
\end{wrapfigure}
To evaluate the impact of initial service time settings on the quality of solution construction, experiments are conducted by varying the service time reserved in the first stage from 0\% to 100\% of the maximum service times in increments of 10\%. 
In the first stage, a feasible trajectory is obtained with a fixed service time ratio. 
In the second stage, service times are further optimized with the Gurobi
solver. 
Experiments are conducted on different test sets to ensure sufficient diversity in pTAR values on different trajectories. 
The average reward of this decomposition strategy, in comparison to the optimal reward, is illustrated in Figure~\ref{percent_gap_analysis}. 
The results show that, on average, reserving approximately 70\% of the maximum service time leads to near-optimal solutions, with an average optimality gap of 3.87\% across the test sets. However, a significant number of outliers still
occur in this setting. 
Similar issues are observed at other service time ratio levels, indicating that any fixed pre-reserved service time ratio is not robust enough across diverse instances. This justifies the necessity of designing STD and STO in solving OPTWVP.

\subsection{Efficiency analysis of pTAR loss}
\label{appendix:suppleexp:ptar}
To investigate this, we conducted supplementary experiments on small- to medium-scale instances (n=50/100, tw=100) to see the efficiency of the introduction of pTAR loss. We define pTAR efficiency as:
pTAR efficiency = the percent increase in performance gap reduction / the percent increase in runtime". 
This metric quantifies the "gain per unit time cost" when introducing the pTAR loss.

\begin{table}[!h]
  \centering
  \begin{tabular}{c|cccc}
    \hline
    Algorithm / num of nodes & gap of n50 & runtime of n50& gap of 100 & runtime of 100 \\ \hline
    DeCoST without pTAR & 2.28\% & 21.48 &3.06\% & 45.71 \\
    DeCoST & 1.06\% & 25.67 & 1.22\% & 52.55\\
    \hline
    Percent increase & 53.5\% & 19.5\% & 60.1\%  & 15.0\% \\
    pTAR efficiency & 2.74 & & 4.0 & \\ 
    \hline
  \end{tabular}
  \label{tab:performance_runtime}
\end{table}

We observe that introducing the pTAR loss consistently improves performance, albeit with an additional runtime overhead of approximately 17\%. Moreover, the pTAR efficiency for n=100 is higher than the pTAR efficiency for n=50, suggesting that pTAR becomes more cost-effective as instance size grows within the tested range.

\section{Mathematical Formulation of OPTWVP} \label{appendix:optwvp}
In this paper, we mainly consider an orienteering problem (OP) variants that have a strong coupling of discrete and continuous variables, namely the orienteering problem with time windows and variable profit (OPTWVP). The mathematical formulation of OPTWVP is expressed as: 
\begin{equation}
\begin{aligned}
    \max \quad & \sum_{i=1}^{n} p_i d_i, \\
    \text{s.t.}  \quad 
    & \sum_{j=2}^{n} x_{1j} =1  \\
    &\sum_{i=1}^{n-1} x_{i|N|} = 1 \\
    & y_{k} \leq 1, \quad \forall k = 2, \ldots, (|N| - 1) \\
    &\sum_{i=1}^{|N|-1} x_{ik} = \sum_{j=2}^{|N|} x_{kj} = y_{k}, \quad \forall k = 2, \ldots, (|N| - 1); \\
    &\tau_{i}^- \leq s_{i} \leq \tau_{i}^+;  \quad \forall i = 1, \ldots, |N|; \\
    &s_{i} + d_{i} + t_{i} - s_{j} \leq L(1 - x_{ij}), \quad \forall i, j = 1, \ldots, |N|;  \\
    & 0 \leq d_{i} \leq d_{i\max},
\end{aligned}
\end{equation}
where $x_{ij}, y_i$ are the binary indicators whether the edge $e_{ij}$ and vertex $v_i$ is in the solution. $s_i, d_i, \tau_{i}^-, \tau_{i}^+$ denotes the start time, the service time, and time windows of the vertex $v_i$. And $\tau_{i}^-$, $\tau_{i}^+$, and $t_{i}$ denote the time window of vertex i and the travel time, respectively. 

It is worth noting that OP, OPVP, OPTW can be considered as a special case of OPTWVP \cite{ils}. As shown in Table \ref{op_compare}, OPTWVP includes all constraints considered in the OPTW and OPVP.
\begin{table}[ht]
    \centering
    \caption{Comparison of OP variants}
    \label{op_compare}
    \begin{tabular}{|c|c|c|}
        \hline
        \textbf{OP Variants} & \textbf{Time Window Constraints} & \textbf{Time-Varying Profits} \\\hline
        OP       & \ding{55} & \ding{55} \\\hline
        OPTW     & \ding{51} & \ding{55}  \\\hline
        OPVP     & \ding{55} & \ding{51} \\\hline
        OPTWVP   & \ding{51} & \ding{51}  \\\hline
    \end{tabular}
\end{table}

\section{Network Architecture of DeCoST} \label{appendix:networkarchitecture}

The proposed DeCoST adopts an encoder-decoder architecture tailored for hybrid routing problems that involve both discrete and continuous decision variables. The model consists of a unified graph encoder and two parallel decoders: a routing decoder for node selection and a service time decoder (STD) for predicting continuous service durations.

\subsection{Input Representation}

Each problem instance is formulated as a fully connected graph $\mathcal{G} = (V, E)$, where the vertex feature is defined as:
    \begin{equation}
        X_i = \{x_i, y_i, s_i^-, s_i^+, p_i\}, \quad i \in V,
    \end{equation}
    where $(x_i, y_i)$ denotes the spatial coordinate, $s_i^-, s_i^+$ are the start and end of the time window, and $p_i$ is the unit profit at node $i$. The edge feature is given by:
    \begin{equation}
        E = \{e_{i,j} \mid i, j \in V\},
    \end{equation}
    where $e_{i,j}$ represents the travel time or Euclidean distance between node $i$ and node $j$.

\subsection{Encoder}

The encoder transforms node and edge features into latent representations $\{h_i\}_{i \in V} \in \mathbb{R}^d$ using a stack of $L$ Transformer layers. In contrast to POMO \cite{pomo}, which operates solely on node embeddings, we incorporate edge features (e.g., pairwise node distances) as additive bias terms in the attention computation. This design choice is motivated by the fact that geometric relationships between nodes play a critical role in routing problems. However, conventional attention mechanisms cannot typically explicitly encode such structural information.

To address this, we adopt the spatial encoding strategy inspired by Graphormer \cite{graphformer}, and inject edge-aware bias into the attention scores. This allows the model to explicitly capture the cost and connectivity structure between nodes, thereby improving its ability to model path feasibility and graph-aware decision-making.

Formally, we modify the attention score between node $i$ and node $j$ as:

\begin{equation}
\alpha_{i,j} = \frac{(q_i)^T k_j + b_{i,j}}{\sqrt{d_k}},
\end{equation}

where $q_i$ and $k_j$ denote the query and key vectors, $d_k$ is the dimensionality of the key, and $b_{i,j}$ is an edge bias term derived from the edge feature $e_{i,j}$ (e.g., the travel distance between $i$ and $j$). We compute the edge bias for each attention head via a learnable projection:

\begin{equation}
b_{i,j}^{(h)} = W_{\text{edge}}^{(h)} \cdot e_{i,j},
\end{equation}

where $W_{\text{edge}}^{(h)} \in \mathbb{R}$ is a learnable scalar weight for the $h$-th attention head.

The final multi-head attention is then computed as:

\begin{equation}
\text{Attn}_h(q_i, k_j, v_j) = \sum_{j=1}^{n} \text{Softmax}\left( \alpha_{i,j}^{(h)} \right) \cdot v_j.
\end{equation}
where $q_i$, $k_j$, $v_j$ are the linear projections of node embeddings, and $b_{i,j}$ is the attention bias derived from $e_{i,j}$.

\subsection{Decoders}

The decoder includes two parallel heads. The routing decoder computes the probability of selecting the next node as:
    \begin{equation}
        p_j = \text{Softmax}\left( \xi \cdot \tanh\left( \frac{h_a^\top h_j}{\sqrt{d}} \right) \right),
    \end{equation}
    where $h_a$ is the attention output computed from the embedding of the last visited node and the current context, and $\xi$ is a temperature parameter controlling exploration. The STD shares the same structure but uses a Sigmoid activation to produce a normalized scalar $\delta_j \in [0, 1]$, representing the ratio of the maximum allowable service time allocated at node $j$. Both decoders share the same encoder output as key-value memory and are conditioned on a context vector derived from the partial solution $\tau_t$. This enables context-aware prediction of both discrete actions and continuous variables.

\section{Experiment details}\label{appendix:experimentdetails}
\label{appendixD}
\subsection{Baselines}
\begin{itemize}
    \item Gurobi \cite{gurobi}: Gurobi is a general-purpose optimization solver that implements the Branch and Cut (B\&C) algorithm to obtain the exact optimal solution.
    \item Greedy-Profit Ratio Search (Greedy-PRS): Greedy-PRS is a two-stage customized greedy heuristic. In the first stage, the route is determined by selecting the node with the highest profit per time unit at each step. The profit per unit time is defined as the total profit obtained upon completing the node, divided by the travel time to the node plus the waiting and service time at the node. In the second stage, service times are optimized using PuLP \cite{pulp}, a linear programming module that is able to solve linear programming problems, without altering the route.
    \item Incremental local search (ILS) \cite{ils}, a state-of-the-art metaheuristic search algorithm solving OPTWVP. It implements fast first solution and local search techniques by first generating an efficient initial feasible solution and then performing neighborhood search for optimization.
    \item POMO \cite{pomo}: POMO is a baseline end-to-end constructive solver that leverages the symmetry of combinatorial optimization problems to improve search efficiency. However, the original POMO framework cannot integrate edge features and handle continuous-time variables. Therefore, we extend the network to the graphformer \cite{graphformer} and implements our proposed service time decoder (STD) and service time optimization (STO) algorithm.
    \item GFACS \cite{kim2024ant_NAR}: GFACS is a non-autoregressive generative method. It first utilizes GFlowNet to extract feature information from graph edges and nodes, and then generates heatmaps to guide the Ant Colony System for solution search. We implemented the STD and greedy/STO approaches on top of this framework.
\end{itemize}
\subsection{Hyperparameters}
\begin{table}[h]
    \centering
    \caption{Hyperparameters used for POMO and DeCoST under different dataset configurations}
    \begin{tabular}{c|ccccc}
        \toprule
        Dataset Config & Train episodes & Test episodes & Epoch & Batch size & POMO size \\
        \midrule
        $n = 50$, TW $= 100$   & 10000 & 10000 & 800 & 128 & 50 \\
        $n = 50$, TW $= 500$   & 10000 & 100 & 800 & 128 & 50 \\
        $n = 100$, TW $= 100$  & 10000 & 1000 & 800 & 128 & 50 \\
        $n = 100$, TW $= 500$  & 10000 & 100 & 800 & 128 & 50 \\
        $n = 500$, TW $= 100$  & 10000 & 100 & 800 & 64  & 50 \\
        Solomon     & 29    & 10 & 1000 & 2 & 250 \\
        \bottomrule
    \end{tabular}
    \label{hyperparam}
\end{table}
We follow the original settings in GFACS \cite{kim2024ant_NAR}.
For ILS \cite{ils}, to balance solver efficiency and ensure fair comparison, we set the maximum search time to 10 seconds during comparative experiments.
For the hyperparameters listed in Table \ref{hyperparam} are used for both the ablation studies and the comparative experiments between POMO and our proposed method, DeCoST. These settings are consistently applied across different dataset configurations to ensure a fair and reliable comparison.
The additional $\beta_1, \beta_2$ hyperparameter, which balances the contribution between the primary objective loss and the $p_{\text{tar}}$ loss, is specific to DeCoST and is set to 1000, 1000, respectively. During training, the loss is small, which leads to a decrease in the gradient signal, so we magnify the loss by 1000 times to improve the training efficiency and convergence speed.
